\documentclass[letterpaper]{article} 
\usepackage{aaai25}  
\usepackage{times}  
\usepackage{helvet}  
\usepackage{courier}  
\usepackage[hyphens]{url}  
\usepackage{graphicx} 
\urlstyle{rm} 
\usepackage{natbib}  
\usepackage{caption} 
\frenchspacing  
\setlength{\pdfpagewidth}{8.5in} 
\setlength{\pdfpageheight}{11in} 
%
\usepackage{algorithm}
\usepackage{algorithmic}

\usepackage{amsmath}
\usepackage{amssymb}
\usepackage{amsthm}
\newtheorem{lemma}{Lemma}
\newtheorem{definition}{Definition}

\newcommand{\stitle}[1]{\paragraph{\textit{{#1.}}}}
%
\usepackage{newfloat}
\usepackage{listings}
\DeclareCaptionStyle{ruled}{labelfont=normalfont,labelsep=colon,strut=off} 
\lstset{%
	basicstyle={\footnotesize\ttfamily},
	numbers=left,numberstyle=\footnotesize,xleftmargin=2em,
	aboveskip=0pt,belowskip=0pt,%
	showstringspaces=false,tabsize=2,breaklines=true}
\floatstyle{ruled}
\newfloat{listing}{tb}{lst}{}
\floatname{listing}{Listing}
%
\pdfinfo{
/TemplateVersion (2025.1)
}

\setcounter{secnumdepth}{2} 

\title{Tokenphormer: Structure-aware Multi-token Graph Transformer \\ for Node Classification}
\author{
    Zijie Zhou\textsuperscript{\rm 1}\equalcontrib,
    Zhaoqi Lu\textsuperscript{\rm 1, \rm 2, \rm 3}\equalcontrib,
    Xuekai Wei\textsuperscript{\rm 1}, 
    Rongqin Chen\textsuperscript{\rm 1}, 
    Shenghui Zhang\textsuperscript{\rm 1}, 
    Pak Lon Ip\,\textsuperscript{\rm 1}, 
    Leong Hou U\textsuperscript{\rm 1}\thanks{Corresponding author.}
}
\affiliations{
    \textsuperscript{\rm 1}University of Macau\\
    \textsuperscript{\rm 2}The Hong Kong Polytechnic University\\
    \textsuperscript{\rm 3}Guangdong Institute of Intelligent Science and Technology, China\\


    \{zhou.zijie, lu.zhaoqi\}@connect.um.edu.mo; xuekaiwei2-c@my.cityu.edu.hk; \{chen.rongqin, zhang.shenghui\}@connect.um.edu.mo; \{paklonip, ryanlhu\}@um.edu.mo
%
}

\usepackage{bibentry}

\begin{document}

\maketitle

\begin{abstract}
Graph Neural Networks (GNNs) are widely used in graph data mining tasks. Traditional GNNs follow a message passing scheme that can effectively utilize local and structural information. However, the phenomena of over-smoothing and over-squashing limit the receptive field in message passing processes. Graph Transformers were introduced to address these issues, achieving a global receptive field but suffering from the noise of irrelevant nodes and loss of structural information. Therefore, drawing inspiration from fine-grained token-based representation learning in Natural Language Processing (NLP), we propose the Structure-aware Multi-token Graph Transformer (Tokenphormer), which generates multiple tokens to effectively capture local and structural information and explore global information at different levels of granularity. Specifically, we first introduce the walk-token generated by mixed walks consisting of four walk types to explore the graph and capture structure and contextual information flexibly. To ensure local and global information coverage, we also introduce the SGPM-token (obtained through the Self-supervised Graph Pre-train Model, SGPM) and the hop-token, extending the length and density limit of the walk-token, respectively. Finally, these expressive tokens are fed into the Transformer model to learn node representations collaboratively. Experimental results demonstrate that the capability of the proposed Tokenphormer can achieve state-of-the-art performance on node classification tasks.
\end{abstract}

%
\begin{links}
    \link{Code}{https://github.com/Dodo-D-Caster/Tokenphormer}
    \link{Appendix}{https://doi.org/10.48550/arXiv.2412.15302}
\end{links}

\section{Introduction}\label{introduction}
Graph structures are commonly seen in both physical and virtual domains, such as anomaly detection~\cite{aaai21AnomalyDetection}, traffic~\cite{aaai24Traffic}, and recommendation~\cite{aaai24Recommendation}. Mining tasks face a significant challenge due to the inherent complexity of various graphs and their intricate features. Graph Neural Networks (GNNs) have emerged as a successful learning framework capable of solving various graph-based tasks. Traditional GNNs follow the message passing scheme~\cite{GraphSAGE_Hamilton2017InductiveRL}, aggregating neighboring nodes to update node representation. However, these methods are hindered by issues such as over-smoothing~\cite{Chen2020} and over-squashing~\cite{Alon_Yahav_2020}.

\begin{figure}[tb!]
    \centering
    \includegraphics[width=0.47\textwidth, trim=0 250 0 0, clip]{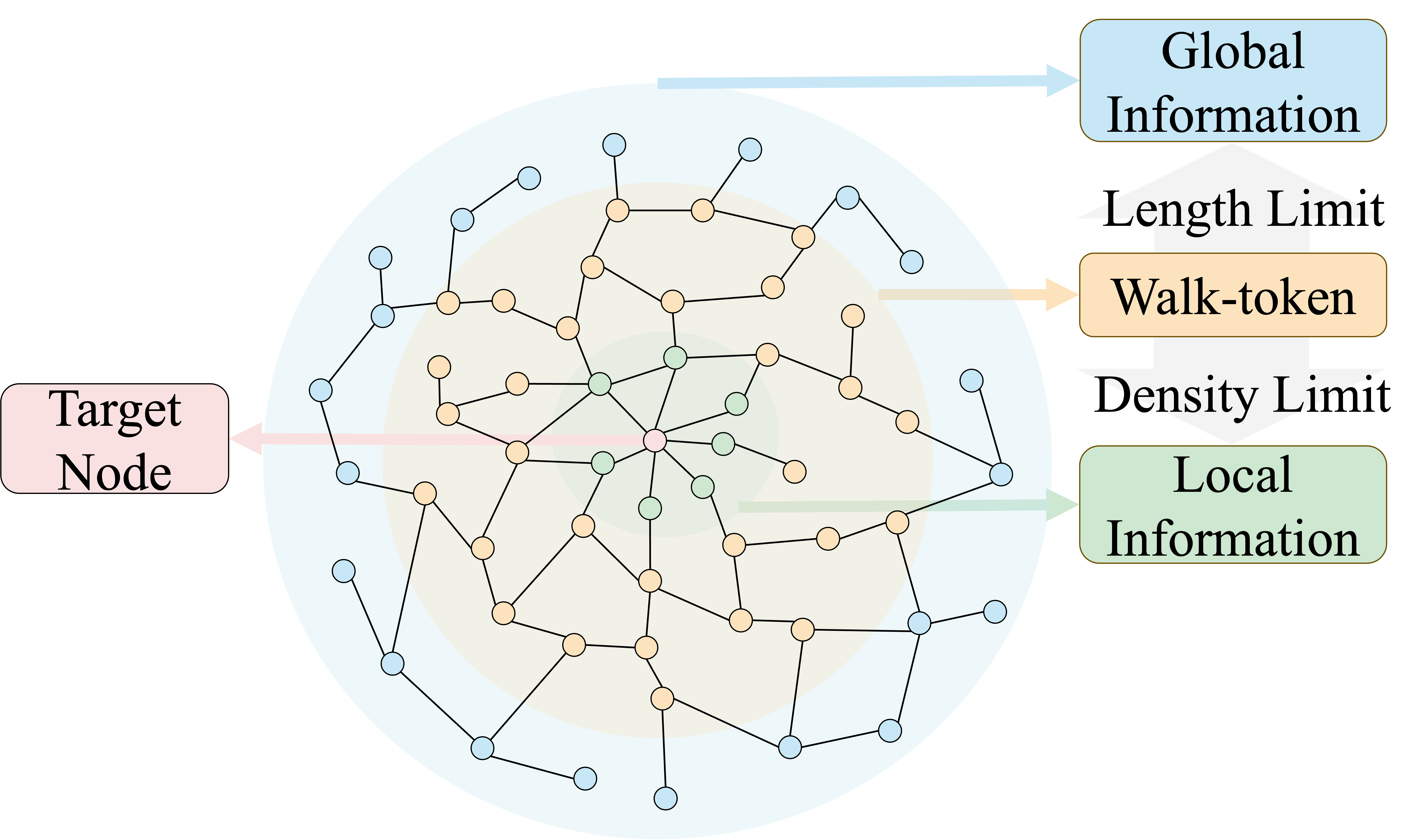}
    \caption{Idea of Tokenphormer. 
    }
    \label{motivation}
\end{figure}

Transformer~\cite{Vaswani_Shazeer_Parmar_Uszkoreit_Jones_Gomez_Kaiser_Polosukhin_2017} has demonstrated remarkable performance due to its powerful modeling capability, making it a potent architecture for graphs as well. The traditional graph Transformer treats the entire input graph as a fully connected entity, where individual nodes are regarded as independent tokens, and a unified sequence encompassing all nodes is constructed, thus alleviating the limitations of Message Passing Neural Networks (MPNNs). This approach allows the graph Transformer to extend the receptive field to the global, which benefits node representation learning. However, this scheme also results in high computational complexity, introduces noise from irrelevant nodes, and leads to a loss of structural information~\cite{Kreuzer_Beaini_Hamilton_Létourneau_Tossou_2021}. This limitation potentially hinders a comprehensive understanding of the entire graph. 

In recent years, researchers have developed token-based methods~\cite{Kreuzer_Beaini_Hamilton_Létourneau_Tossou_2021, Gophormer, ANS-GT} that generate a sequence for each node
, which have shown improved performance compared to the traditional graph Transformer. However, these models encounter difficulties when dealing with various graphs, as they lack the flexibility to thoroughly {\em explore the complete graph structure} due to the monotonous token design. For instance, NAGphormer~\cite{NAGphormer}, which serves as a representative and has achieved competitive results on homogeneous graphs, utilizes hop information to form tokens. These tokens are coarse-grained since they overlook crucial information, such as relationships among nodes within the same hop and data beyond the designated $k$-hop neighborhood, limiting the model's performance on diverse graph types like heterogeneous graph.

To address the aforementioned limitations, token generation strategies should be enhanced in two key aspects: 1) \textbf{Fine-grained.} Given the diverse and complex nature of graph structures, tokens should be more fine-grained, effectively capturing intricate structures and node relationships. 2) \textbf{Comprehensive.} To gain a better understanding of the entire graph, tokens should provide an approximate full coverage, capturing both global and local information for improved expressiveness. A natural question arises: {\em Can we design token generation strategies that can create multiple effective tokens for graph nodes that substantially improve the ability to investigate varied graph structures?}

To answer this question, motivated by fine-grained token-based representation learning in Natural Language Processing (NLP), we propose {\em graph serialization}, which transforms the graph structure into {\em graph document} consisting of {\em graph sentences} using random walk, to effectively serialize the entire graph as well as node's neighborhood with minimal information loss. Graph serialization is the foundation for designing multiple token generators, enabling a more flexible exploration of diverse graph types.

Figure~\ref{motivation} shows the idea of Tokenphormer. To meet the fine-grained requirement, we utilize {\em walk-token} to flexibly explore the graph (yellow area in Figure~\ref{motivation}). Specifically, the {\em walk-token} incorporates four distinct walk types, each representing a serialized node's neighborhood with varying receptive fields and walking strategies. These walk types are designed to capture a diverse range of information, making them suitable for various graph structures. The four types are: uniform random walk, non-backtracking random walk, our proposed neighborhood jump walk, and non-backtracking neighborhood jump walk. It should be noted that each walk type is non-irreplaceable, just as a tailored design of data augmentation to address the requirement of fine-grained.

Given the limited receptive field of {\em walk-token}, we address the comprehensive requirement by introducing two additional token types: {\em SGPM-token} (blue area in Figure~\ref{motivation}) and {\em Hop-token} (green area in Figure~\ref{motivation}), extending the length and density limit of {\em walk-token}. Specifically, {\em SGPM-tokens} are obtained from the Self-supervised Graph Pre-train Model (SGPM), which pre-trains on the graph document (serialized entire graph), ensuring the global coverage of the {\em walk-token}. Moreover, {\em Hop-tokens} are generated through each hop's message, ensuring the local coverage of the {\em walk-token}.

Finally, we introduce Tokenphormer (Structure-aware Multi-token Graph Transformer),
a node classification approach that leverages the power of Transformer to jointly learn all these token types. Tokenphormer pioneers {\em the generation of diverse fine-grained and comprehensive tokens for data augmentation, adaptly extracting information across a spectrum of scales, from local to global contexts} and is adaptive to diverse graphs, outperforming other methods on both homogeneous and heterogeneous graphs.
The main contributions of this work are as follows:
\begin{itemize}   
  \item  We present Tokenphormer, a pioneering node classification approach that generates multiple tokens rich in structural information and varying granularities through diverse walk strategies, to address the limitations of existing graph Transformers.
  \item We propose an effective way to serialize the graph into a graph document and introduce SGPM to capture global and contextual information from our proposed graph documents in the pre-train period.
  \item Experimental results demonstrate that Tokenphormer outperforms existing state-of-the-art graph Transformers and mainstream MPNNs on most homogeneous and heterogeneous graph benchmark datasets, showcasing its applicability to diverse graphs. 
\end{itemize} 



\section{Related Work}\label{RelatedWork}
\subsection{Existing Graph Learning Architectures} \label{GNNs}

GNN methods can be generally categorized nto  Message Passing Neural Networks (MPNNs) and graph Transformers. The MPNN approach involves two main steps: aggregation and update. Each node first aggregates features from its neighbors and then updates its representation by combining its features with the aggregated data~\cite{MPNN1_pmlr-v80-xu18c, MPNN2_pmlr-v70-gilmer17a}. Notable models like GCN~\cite{GCN_kipf2017semi}, GAT~\cite{GAT_veličković2018graph}, and GraphSAGE~\cite{GraphSAGE_Hamilton2017InductiveRL} have demonstrated strong performance with this method. However, MPNNs face challenges such as over-smoothing~\cite{Chen2020} and over-squashing~\cite{Alon_Yahav_2020}. To address these issues, methods like DropEdge~\cite{dropedge}, which randomly removes edges during training, and techniques to reduce message passing redundancy~\cite{NEURIPS2022_RFGNN}, have been developed.

The Transformer architecture~\cite{Vaswani_Shazeer_Parmar_Uszkoreit_Jones_Gomez_Kaiser_Polosukhin_2017, schmitt-etal-2021-modeling, dufter2022position, GOAT, largeGT} has gained popularity in graph representation learning as a solution to these problems. For example, GT~\cite{GT} uses Laplacian eigenvectors for positional encoding, while GraphTrans~\cite{GraphTrans} and GROVER~\cite{GROVER} incorporate GNNs as auxiliary modules to capture structural information. Graphormer~\cite{Graphormer} integrates centrality and spatial encoding to add structural inductive bias into the attention mechanism. These methods treat the entire graph as a sequence of node tokens and are of high computation complexity. Recent advancements like Gophormer~\cite{Gophormer} sample relevant nodes to form token sequences, shifting the Transformer's training focus from the entire graph to node sequences. NAGphormer~\cite{NAGphormer} introduces the Hop2Token mechanism, converting neighborhood features from each hop into different token sequences. However, though the complexity is decreased, the simplex design of tokens limits these models' performance on diverse graphs.

\begin{figure*}[!hbt]
\centering
\includegraphics[width=0.85\textwidth]{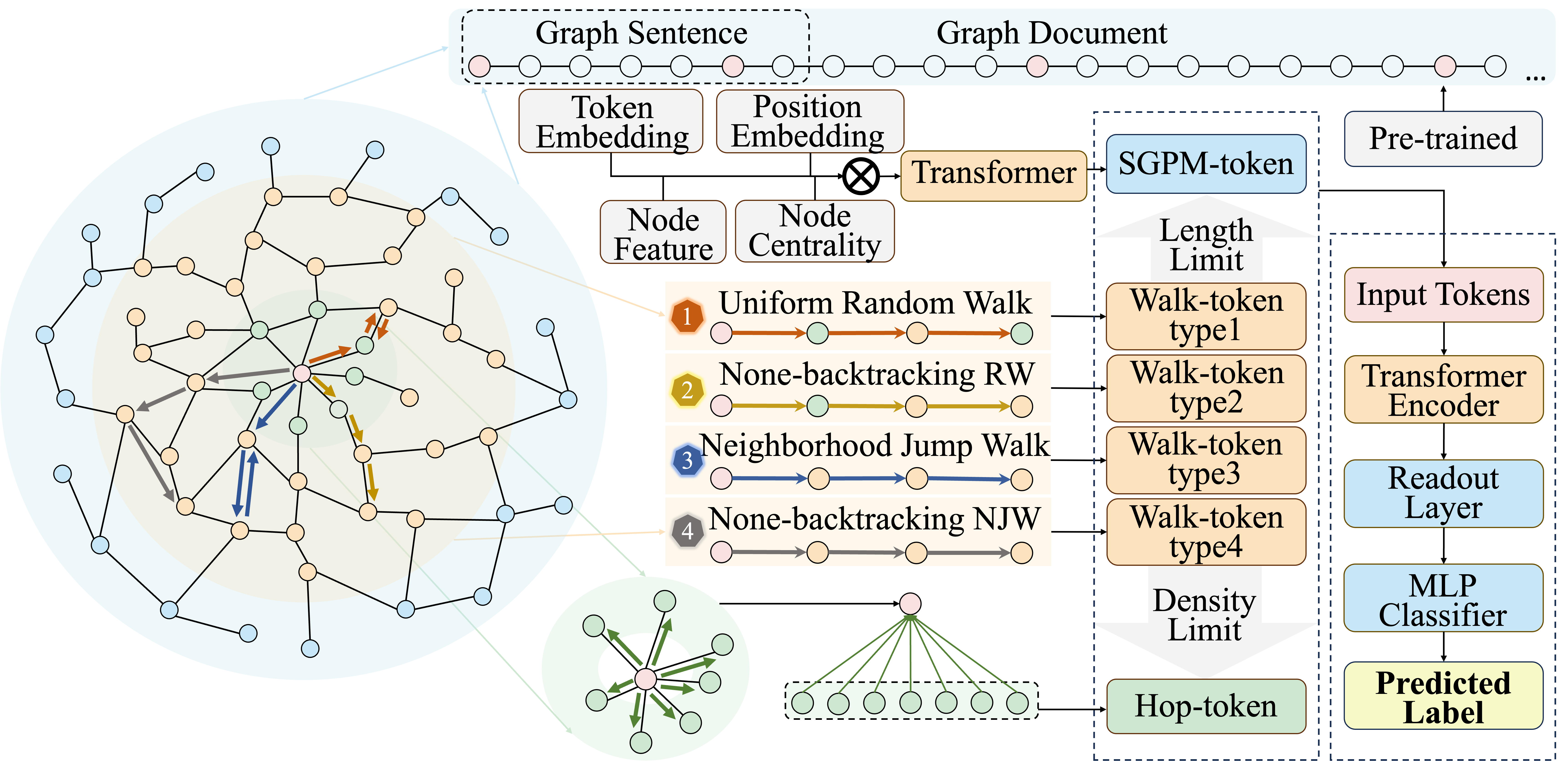}
\caption{Framework. RW refers to random walk while NJW stands for neighborhood jump walk. Tokenphormer generates diverse tokens with different levels of granularity for the target node (red node), respectively {\em walk-tokens} (yellow area), {\em SGPM-token} (blue area) and {\em hop-token} (green area), comprehensively mining essential information from the whole graph. Then, all these tokens are constructed into a sequence and fed into the Transformer-based backbone to jointly learn the final node representation. Finally, an MLP-based module is employed for node classification tasks.
}
\label{tokenphormer}
\end{figure*}

\subsection{Token-based Representations}\label{NLP}

NLP has advanced significantly due to deep learning and large language models (LLMs) like GPT~\cite{GPT}. These improvements hinge on sophisticated token representations that capture semantic and syntactic nuances, enabling applications like sentiment analysis and machine translation. Techniques like Word2Vec~\cite{Word2Vec} and GloVe~\cite{GloVe} use neural networks to place words in continuous vector spaces, enhancing contextual understanding. Models like BERT~\cite{BERT} and GPT~\cite{GPT} further improve this with contextualized embeddings that consider surrounding words, leading to nuanced semantic comprehension.

In pursuing advanced token representations, there is a shift towards enhanced tokens within graph nodes~\cite{Graph-Bert}. Traditional graph Transformers~\cite{GT_yun2019GTN, Graphormer} use independent nodes as tokens, resulting in high computational complexity and increased noise. Node2Sequence methods like Gophormer~\cite{Gophormer} sample ego-graphs and add global nodes to form sequences but still overlook edge information, limiting node representation. The Hop2Token method proposed by NAGphormer~\cite{NAGphormer} aggregates each hop into a token but remains coarse-grained. To make the input tokens of the graph Transformers more fine-grained, inspired by subword and character-level tokenization~\cite{Word2Vec}, generating multiple tokens for nodes can better capture graph structure details, nuances, and semantic intricacies, aligning with the trend of comprehensive token representations to enhance graph models.

\section{The Proposed Method}\label{ProposedMethod}




\subsection{Notations and Problem Formulation}
\label{notationsAndProblemFormulation}
Consider $G=(V, E)$ be an unweighted graph with node set $V$ and edge set $E$, where $V = \{{v_1, v_2, \cdots, v_n}\}$ and $n=|V|$. Each node's feature vector $x_v \in X$, where $X \in \mathbb{R}^{n \times d_F}$ is a feature matrix with $d_F$ dimensions to describe the attribute information of nodes. The adjacency matrix of $G$ is denoted by $A \in \mathbb{R}^{n \times n}$. A random walk in the graph $G$ follows a transition probability matrix $P$, which varies depending on the type of walk. Here, $P_{ij}$ denotes the probability of node $i$ choosing node $j$ as its next state. In this paper, we focus mainly on the semi-supervised node classification task. Specifically, given nodes in the training set $T_V$ with known labels $Y_v$ and feature vectors $X_v$ for $v \in V$, our aim is to predict the unknown $Y_u$ for all $u \in (V - T_V )$.

\subsection{Preliminaries}\label{preliminary}

\stitle{Markov chain and graph random walk}
A Markov chain~\cite{chung1967markov} is a stochastic process where transitions between states depend only on the current state, reflecting its memoryless property. It has the recurrence property if it can revisit certain states. A {\em random walk}~\cite{lovasz1993random} on a graph is a type of Markov chain, where each step depends on the current vertex.

\stitle{Central limit theorem in graphs}
The central limit theorem for Markov chains states that if two independent chains with arbitrary initial distributions share the same transition matrix, their limiting distributions will converge to the same value as time approaches infinity~\cite{10.1214/154957804100000051}. 

The limiting distribution can be reached by a Markov chain with {\em recurrence} and {\em aperiodic} property, satisfying:
\begin{equation}
\lim_{n \to \infty} (S_n = s_i) = \pi (s_i)
\end{equation}
where $S_n$ is the $n$-th state of the Markov chain and $s_i$ is the $i$-th state in the state space. A random walk on a connected and non-bipartite graph is recurrent and aperiodic, guaranteeing convergence to the limiting distribution and satisfying the central limit theorem, as formalized in the following lemma:
\begin{lemma}\label{lemma1}
If $G$ is a connected, non-bipartite graph, then for any initial distribution $\pi_0$ on $v \in V$, we have:
\begin{equation}
    \lim_{n \to \infty} (\pi_0 P^n)(v) = \pi(v)
\end{equation}
\end{lemma}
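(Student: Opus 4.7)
The plan is to prove Lemma~\ref{lemma1} by showing that the simple random walk on $G$ is an irreducible, aperiodic, finite-state Markov chain, and then invoking the fundamental limit theorem for such chains. I would first set up the walk explicitly, taking $P_{ij}=1/\deg(i)$ when $(i,j)\in E$ and $0$ otherwise, so $P$ is a row-stochastic $n\times n$ matrix over the finite state space $V$.

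Next, I would verify the two structural hypotheses separately. For \emph{irreducibility}: because $G$ is connected, between any $u,v\in V$ there is a path $u=w_0,w_1,\dots,w_k=v$, and the one-step probabilities along this path are all strictly positive, so $(P^k)_{uv}>0$. Every state thus communicates with every other, which is exactly irreducibility. For \emph{aperiodicity}: because $G$ is non-bipartite, it contains an odd closed walk through some vertex $u$, say of length $2\ell+1$; since $G$ is also connected with $n\geq 2$, from any vertex we can take an even-length closed walk (go to a neighbor and back, length $2$). Conjugating with a path from $v$ to $u$ and back produces, for each vertex $v$, closed walks of both an odd and an even length, so the period $\gcd$ of return times at $v$ divides both, forcing the period to be $1$. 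Hence the chain is aperiodic at every state.

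With irreducibility and aperiodicity established on a finite state space, I would invoke the classical ergodic theorem for finite Markov chains (or equivalently Perron--Frobenius applied to the stochastic matrix $P$): there exists a unique stationary distribution $\pi$, and $P^n_{uv}\to\pi(v)$ as $n\to\infty$ for every pair $u,v\in V$. A short computation then finishes the argument, namely
\begin{equation}
(\pi_0 P^n)(v)=\sum_{u\in V}\pi_0(u)\,P^n_{uv}\;\longrightarrow\;\sum_{u\in V}\pi_0(u)\,\pi(v)=\pi(v),
\end{equation}
where the interchange of limit and finite sum is immediate because $|V|<\infty$, and the last equality uses $\sum_u\pi_0(u)=1$.

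I do not expect a genuine obstacle here, since this is a textbook consequence of ergodicity; the only mildly delicate step is the verification of aperiodicity, where one must combine the odd cycle guaranteed by non-bipartiteness with a trivial even closed walk (neighbor-and-back) to conclude that the $\gcd$ of return-time lengths at \emph{every} vertex, not just at one distinguished vertex on the odd cycle, equals $1$. The rest is bookkeeping: stating the walk's transition matrix, quoting the ergodic theorem, and passing the limit through a finite sum.
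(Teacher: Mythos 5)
Your proof is correct: irreducibility follows from connectedness, aperiodicity from combining the odd closed walk guaranteed by non-bipartiteness with the length-two neighbor-and-back walk, and the conclusion then follows from the ergodic theorem for finite irreducible aperiodic chains together with passing the limit through the finite sum over initial states. For comparison, the paper does not actually prove this lemma anywhere --- it is stated in the preliminaries as a known consequence of the Markov chain convergence theory and cited to the literature (Lov\'asz's survey on random walks and Jones's survey on the Markov chain CLT) --- so your argument is exactly the standard textbook justification that the paper implicitly relies on, and the only point worth being careful about is the one you already flagged, namely that aperiodicity must hold at every vertex (or, equivalently, that the period is a class property of an irreducible chain, so establishing it at one vertex on the odd cycle suffices).
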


\subsection{Graph Serialization}\label{Graph-Serialization}

Graph serialization encompasses transforming a graph structure into a sequence structure
and there exist two cases:

\stitle{Case 1: Serializing a graph by a long walk} 
Leveraging the principles of the central limit theorem (Lemma~\ref{lemma1}) in the context of graphs offers a viable strategy for graph serialization. When the length of a random walk on a graph tends towards infinity, the initial distribution's impact diminishes, leading to a more comprehensive depiction of the underlying graph structure. We name the document generated through this method the {\em Graph Document}, defined as follows:

\begin{definition}[Graph sentence and document]
\label{graphDocument}
Graph sentence refers to an individual walk in the graph; the collection of a large number of graph sentences starting from every node in the graph constitutes a graph document.
\end{definition}

Nevertheless, it is crucial to acknowledge that employing an infinite-length walk on a graph is impractical during training. In practice, the limiting distribution can be approximated through a finite number of random walks, each with a limited length. 
Let $walk(v_i, v_j)$ denote a walk starting from node $v_i$ and ending at node $v_j$. Then $walk(v_i, v_j)$ and $walk(v_j, v_k)$ can be viewed as partitions of $walk(v_i, v_k)$. By concatenating the graph sentences, we can create significantly longer walks that encompass all the nodes in the graph.

\stitle{Case 2: Serializing nodes by diverse walks} Despite serializing the whole graph into graph document, it is also important to serialize the node's neighborhood for node classification tasks. 
To ensure that all sentences in the graph are strongly connected with the target node, the graph serialization approach can involve multiple diverse random walks initiated from the target node.
Moreover, We also demonstrate that the coverage speed of a node's neighborhood increases exponentially through this neighborhood serialization method in Appendix A.2.

\subsection{Graph Tokenization}
Based on graph serialization, graph tokenization tokenizes the graph into various token types to meet the requirement of fine-grained and comprehensive node representation.


\subsubsection{Walk-token}\label{walk_token}
Walk-tokens aim to achieve fine-grained representation, aligning with the second case of graph serialization. They are generated through four distinct walk types, each capturing information from nearby and distant neighbors at varying granularity levels.


\stitle{(1) Uniform and non-backtracking random walk}
{\em Uniform random walk} randomly selects the next node from the current node's neighbors. Additionally, we explore non-backtracking random walk, which exhibits faster convergence to its limiting distribution compared to uniform random walk in most cases~\cite{ALON_BENJAMINI_LUBETZKY_SODIN_2007}.

\begin{definition}[Non-backtracking random walk]
\label{def2}
    A non-backtracking random walk on G is a sequence $(v_0, v_1, \ldots, v_k)$ of vertices $v_i \in V$ where $v_{i+1}$ is randomly selected from the neighbors of $v_i$, and it must satisfy the condition $v_{i+1} \ne v_{i-1}$ for $i=1,\ldots,k-1$.
\end{definition}

The non-backtracking random walk avoids revisiting the last encountered node. This seems to contradict the Markov property. However, by redefining the state space from nodes to edges, the non-backtracking random walk can still be seen as a Markov chain~\cite{Non-backtracking}, allowing us to use the earlier mentioned theorem. 

Let the edge connecting node $u$ to node $v$ be denoted as $(u,v)$. The transition probability matrix $\tilde P((u,v),(x,y))$ for the non-backtracking random walk can be expressed as: 
\begin{equation}
    \tilde P((u,v),(x,y))  =\left\{
    \begin{array}{rcl}
    \displaystyle{\frac{1}{d_v-1}}       &      & {if \;  v=x, \; y\ne u}\\
    0                                    &      & {otherwise}\\
    \end{array} \right.
\end{equation}
where $d_v$ denotes the degree of node $v$.

Furthermore, non-backtracking random walks can traverse longer distances and reduce redundancy in the generated walk sequences~\cite{NEURIPS2022_RFGNN}.

\stitle{(2) Neighborhood jump walk and non-backtracking version} 
Conventional random walk approaches are limited by walk length. While the information from nearby neighbors is more significant, disregarding distant neighbors is not advisable. Inspired by dilated convolution~\cite{dilatedConv}, which has been proved to be effective~\cite{dilatedConvProof}, we define the neighborhood jump walk as a random walk with dilated transition probability. Unlike uniform random walks, the neighborhood jump walk can jump to any node within its $k$-hop neighborhood and continue recursively. Thus, the neighborhood jump walk's reachable area expands by the neighborhood's radius. Below is the definition of the neighborhood jump walk:

\begin{definition}[Neighborhood jump walk]
\label{def3}
    A neighborhood jump walk on G is a sequence $(v_0, v_1, ..., v_k)$ of vertices $v_i \in V$ where $v_{i+1}$ is chosen among the nodes in the $k$-hop neighborhood of $v_i$. The transition probability of $v_i$ to choose $v_{i+1}$ in $v_i$'s $k$-hop neighborhood follows $k$-step probability propagation.
\end{definition}


\begin{figure}[!t]
\centering
\includegraphics[width=0.49\textwidth]{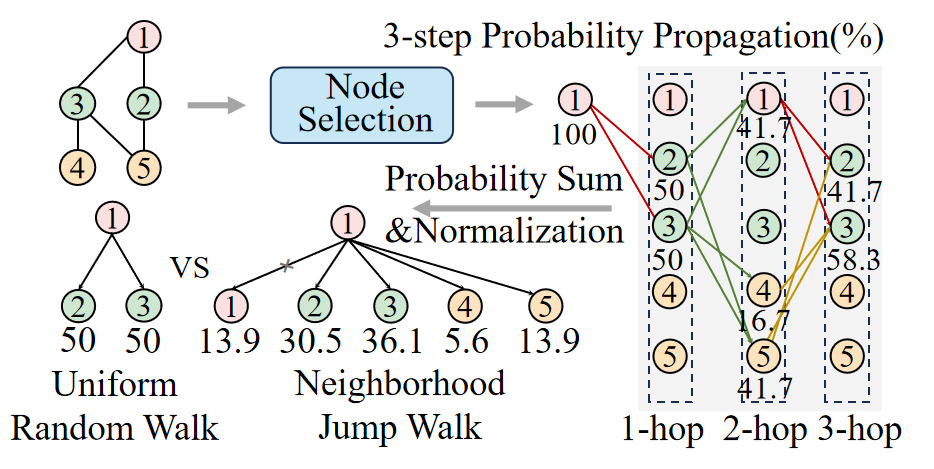}
\caption{Transition Probability of Neighborhood Jump Walk.}
\label{tp_neighborhoodjump}
\end{figure}

The transition probability matrix for the neighborhood jump walk, described in Definition \ref{def3}, is computed using a $k$-step probability propagation process. In the given graph (Figure~\ref{tp_neighborhoodjump}), for the $1$-st step, the probability of node 1 (100\%) is split averagely to its neighbors, node 2 (50\%) and node 3 (50\%). This process iterates to calculate the likelihood of node 1 reaching each node within its 3-hop neighborhood. By aggregating and normalizing these probabilities, the overall probability of node 1 visiting any node in its 3-hop neighborhood is established. 

These probabilities can be calculated in advance, thereby preserving the Markov property.
Moreover, to prevent a node from revisiting itself, the probability of self-loops is eliminated. The likelihood of returning to the last visited node can also be negated to embed the {\em non-backtracking property}, which can accelerate the walk's convergence. 

\subsubsection{SGPM-token}\label{sgpm-token}
SGPM-token extends the length limit of walk-tokens, aligning with the first case of graph serialization and facilitating a comprehensive exploration of positional nuances across nodes in the graph. SGPM-tokens are created by applying weights derived from the Self-supervised Graph Pre-trained Model (SGPM). 

\stitle{Self-supervised Graph Pre-trained Model, SGPM}
Self-supervised pre-trained models have emerged as a highly effective approach for learning representations from unlabeled NLP data. Inspired by this, we introduce SGPM to learn from graph document unsupervisedly. 

\stitle{(1) Document generation}
As stated in Definition \ref{graphDocument}, a graph document consists of numerous graph sentences, which are walks generated by individual nodes with varying lengths. In the pre-training phase of SGPM, preserving the raw node connections is crucial for learning the graph's structure and contextual information. Thus, we use non-backtracking random walks as our final choice, as they extend the walk while maintaining original relationships. The walk length follows a normal distribution, $\mathcal{N}(\mu, \sigma^2)$, where the mean $\mu$ corresponds to the graph's radius $R$ and the standard deviation $\sigma$ is set to 1.

\stitle{(2) Tokenizer and input representation}
After generating the graph document, the next step involves tokenizing the sentences. Similar to tokenization in NLP, a {\em vocabulary} is created to encompass all tokens. In graph structures, each node maps directly to a corresponding token. Additionally, five special tokens are added: \texttt{PAD} for sequence padding, \texttt{UNK} for unknown nodes, \texttt{CLS} and \texttt{SEP} for sequence boundaries, and \texttt{MASK} for replacing hidden nodes.

Once the vocabulary is established, the graph sentences are tokenized into sequences comprised of these tokens. Before input into the SGPM, input representations for each token in the sequence are needed. Notably, let \( S \) denote the set of tokens, and for each token $j \in S$, the token embeddings $T_{j}$ and position embeddings $P_{j}$ are added, which are commonly used in NLP pre-train models. Furthermore, our input representation $h_{j}$ is enriched by incorporating node features $X_{j}$ and measures of node centrality $C_{j}$. Finally, we fuse these embeddings using sum pooling:
\begin{align}
    h_{j} = \text{SUM}(T_{j}, P_{j}, X_{j}, C_{j})
\end{align}
\stitle{(3) Graph pre-training process}
Based on experience from NLP, we employ the Masked Language Model (MLM)~\cite{BERT} as SGPM's pre-train task. Let \( \mathbf{H} = [h_1, h_2, \ldots, h_N] \) represent the input sequence of tokens, and \( N \) the total number of tokens. Initially, 15\% of the tokens are randomly replaced with the special token \texttt{MASK}. This masked sequence is then input into the bidirectional Transformer model, which generates individual scores \( \mathbf{S} = [s_1, s_2, \ldots, s_N] \) for each token. These scores represent the likelihood of each token given its context and are then compared to the actual ground truth values \( \mathbf{Y} = [y_1, y_2, \ldots, y_N] \), where \( y_i \) is the true label of the masked token \( h_i \). The cross-entropy loss is calculated as:
\begin{equation}
    \mathcal{L} = -\frac{1}{N} \sum_{i=1}^{N} y_i \log(s_i)
\end{equation}

\subsubsection{Hop-token}\label{hop_toekn}
{\em Hop-token} is designed to capture hop information within a limited $k$-hop neighborhood, ensuring a structured and comprehensive representation of a node's local context. In Tokenphormer, the $k$-th {\em Hop-token} of the target node is computed as decoupled $(k+1)$-th layer of message passing, as depicted in Fig \ref{tokenphormer}. This approach allows us to incorporate the information from nodes within the $k$-hop radius, providing a rich representation of the node's immediate surroundings.

\begin{table*}[htb!]
    \begin{center}
    \small
    \setlength{\tabcolsep}{1mm}{%
    \begin{tabular}{lccccccc}
    \hline
    \textbf{Method} &\textbf{Year} &\textbf{Cora} &\textbf{Citeseer} &\textbf{Flickr} &\textbf{Photo} &\textbf{DBLP} &\textbf{Pubmed}\\
    \hline
    GCN & 2017 & 87.33$\pm$0.38 & 79.43$\pm$0.26 & 61.49$\pm$0.61 & 92.70$\pm$0.20 & 83.62$\pm$0.13 & 86.54$\pm$0.12\\
    GAT & 2017 & 86.29$\pm$0.53 & 80.13$\pm$0.62 & 54.29$\pm$2.56 & 93.87$\pm$0.11 & 84.19$\pm$0.19 & 86.32$\pm$0.16\\
    GraphSAGE & 2018 & 86.90$\pm$0.84 & 79.23$\pm$0.53 & 60.37$\pm$0.27 & 93.84$\pm$0.40 & 84.73$\pm$0.28 & 86.19$\pm$0.18\\
    APPNP & 2018 & 87.15$\pm$0.43 & 79.33$\pm$0.35 & \textbf{93.25$\pm$0.24} & 94.32$\pm$0.14 & 84.40$\pm$0.17 & 88.43$\pm$0.15\\
    JKNet & 2018 & 87.70$\pm$0.65 & 78.43$\pm$0.31 & 53.66$\pm$0.40 & - & 84.57$\pm$0.28 & 87.64$\pm$0.26\\
    GPR-GNN & 2021 & 88.27$\pm$0.40 & 78.46$\pm$0.88 & - & 94.49$\pm$0.14 & - & 89.34$\pm$0.25\\
    GATv2 & 2022 & - & - & - & - & 84.96$\pm$0.47 & 85.75$\pm$0.55\\
    GRAND+ & 2022 & 85.8x$\pm$0.4x &  75.6x$\pm$0.4x & - & 94.75$\pm$0.12 & - & 88.64$\pm$0.09\\
    \hline
    GT & 2020 & 71.84$\pm$0.62 & 67.38$\pm$0.76 & 68.59$\pm$0.64 & 94.74$\pm$0.13 & 81.04$\pm$0.27 & 88.79$\pm$0.12\\
    Graphormer & 2021 & 72.85$\pm$0.76 & 66.21$\pm$0.83 & 66.16$\pm$0.24 & 92.74$\pm$0.14 & 80.93$\pm$0.39 & 82.76$\pm$0.24\\
    SAN & 2021 & 74.02$\pm$1.01 & 70.64$\pm$0.97 & 70.26$\pm$0.73 & 94.86$\pm$0.10 & 83.11$\pm$0.32 & 88.22$\pm$0.15\\
    Gophormer & 2021 & 87.85$\pm$0.10 & 80.23$\pm$0.09 & 91.51$\pm$0.28 & - & 85.20$\pm$0.20 & 89.40$\pm$0.14\\
    ANS-GT & 2022 & 88.60$\pm$0.45 & 80.25$\pm$0.39 & - & 94.41$\pm$0.62 & - & 89.56$\pm$0.55 \\
    GraphGPS & 2022 & - & - & - & 95.06$\pm$0.13 & - & 88.94$\pm$0.16\\
    Exphormer & 2023 & - & - & - & 95.27$\pm$0.42 &- & 89.52$\pm$0.54 \\
    Gapformer & 2023 & 87.37$\pm$0.76 & 76.21$\pm$1.47 & - & 94.81$\pm$0.45 & \textbf{85.50$\pm$0.43} & 88.98$\pm$0.46 \\
    NAGphormer & 2023 & 90.56$\pm$0.39 & 80.02$\pm$0.80 & 89.66$\pm$0.63 & 95.49$\pm$0.11 & 84.62$\pm$0.13 & 89.60$\pm$0.14\\
    \hline
    Tokenphormer & ours &  \textbf{91.20$\pm$0.47} &  \textbf{81.04$\pm$0.24} & 92.44$\pm$0.35 &  \textbf{96.14$\pm$0.14} & 85.13$\pm$0.10 &  \textbf{89.94$\pm$0.20}\\
    \hline
    \end{tabular}
    }
    \end{center}
    
    \caption{Comparison of Tokenphormer with baselines on various datasets. The best results are in \textbf{bold}. `x' and `-' mean unknown numbers. Part values of NAGphormer are run by ourselves due to missing of standard deviation in raw paper~\cite{NAGphormer}.}
    \label{tab:main}
\end{table*}

\subsection{Tokenphormer}\label{Multi-tokenContextualGraphTransformer}
Tokenphormer aims to utilize diverse tokens to jointly learn node embeddings for node classification. As illustrated in Figure~\ref{tokenphormer}, each node $v$ has one SGPM-token, $n$ hop-tokens, and $m$ walk-tokens. These tokens are sequentially input into the Transformer encoder for processing, which includes multi-head self-attention (MSA) and position-wise feed-forward network (FFN) components, to compute the output of the $l$-th Transformer layer $H_{v}^{l}$:
\begin{equation}
    \begin{split}
        &\hat{H}_{v}^{l} = MSA(Norm(H_{v}^{l-1}))  + H_{v}^{l-1} \\
        &H_{v}^{l} = FFN(Norm(\hat{H}_{v}^{l}) + \hat{H}_{v}^{l}
    \end{split}
\end{equation}
where $l = 1, \ldots , L$ denotes the $l$-th Transformer layer. Within each mini-batch of inputs, we apply Layer Normalization using the $Norm(\cdot)$ function.

Since tokens are generated using different strategies, they may have varying contributions to the target node embedding. Instead of employing commonly used readout functions such as mean and summation, we employ an attention-based readout function to learn the different weights of token embeddings. Let $H \in R^{K \times d_{h}}$ denote the token representation of a node. Here, $K$ represents the total number of tokens, $d_h$ is the hidden dimension of the Transformer, and $H_k$ is the $k$-th token representation of the node. We calculate the attention parameter between different tokens by:
\begin{equation}
    \alpha_k = exp(H_kW_a^{\top}) / \sum_{i=1}^{K}(exp(H_iW_a^{\top}))
\end{equation}
where $W_a \in R^{1 \times 2d_h}$ denotes the learnable parameter matrix, and $i = 1, \ldots , K$. 
Based on this, the final node representation $H_{fin}$ is aggregated as follows:
\begin{equation}
    H_{fin} = \sum_{k=1}^{K} \alpha_{k}H_{k}
\end{equation}

\subsection{Analysis of Components}
Based on Definition \ref{graphDocument} and the analysis presented in graph serialization part, it is evident that graph documents can converge to the limiting distribution. Notably, this distribution tends to vary across different graphs. Consequently, we propose the following Lemma to examine the expressiveness of graph documents and prove it through graph isomorphism, which can be found in Appendix A.1.

\begin{lemma}
Graph documents possess the capability to distinguish non-isomorphic graphs.
\end{lemma}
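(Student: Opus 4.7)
The plan is to establish the lemma by contraposition: if two graphs $G_1$ and $G_2$ produce indistinguishable graph documents, they must be isomorphic. First I would formalize indistinguishability as the existence of a bijection $\phi : V(G_1) \to V(G_2)$ such that, for every $v \in V(G_1)$, the multiset of graph sentences starting at $v$ equals, under the token-wise action of $\phi$, the multiset of graph sentences starting at $\phi(v)$ in $G_2$. This matches Definition~\ref{graphDocument}, where a graph document is precisely the collection of sentences produced from every node.

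Next I would reconstruct $E(G_1)$ directly from its document. I would restrict attention to the uniform random walk component of the document (one of the four walk types from Section~\ref{walk_token}). For any sentence of length at least one starting at $v$, the second token $u$ is sampled uniformly from $N(v)$, so by the convergence guaranteed by Lemma~\ref{lemma1} together with the many-sentences interpretation used throughout the graph serialization section, the support of the second token is exactly $N_{G_1}(v)$. Reading off this support for every $v$ reconstructs the adjacency relation of $G_1$, and symmetrically for $G_2$. If the documents are aligned by $\phi$, then $u \in N_{G_1}(v)$ if and only if $\phi(u) \in N_{G_2}(\phi(v))$, so $\phi$ is a graph isomorphism, contradicting the assumption $G_1 \not\cong G_2$. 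We conclude that non-isomorphic graphs must have distinguishable documents.

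The hard part, I expect, will be dealing rigorously with the richer walk strategies in the document, since non-backtracking and neighborhood-jump walks do not directly expose $N(v)$ through a single transition: a neighborhood-jump walk can hop several hops away on its first step, and a non-backtracking walk carries edge-based memory that pushes it outside a plain Markov-on-vertices framework. I would bypass this by isolating the uniform random walk sub-document and arguing that indistinguishability of the full document forces indistinguishability of this sub-document, reducing the recovery argument to the clean vertex-Markov setting. A secondary subtlety is the finite-sample nature of any realized document; I would handle it with the same asymptotic reading the paper invokes around Lemma~\ref{lemma1}, so that every positive-probability transition is eventually observed and the reconstructed supports are faithful.
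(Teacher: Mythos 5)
Your proof is correct under your formalization, but it takes a genuinely different route from the paper's. The paper does not attempt to recover the edge set from the document at all: it collapses the document to the limiting (stationary) distribution of the walk, computes $\pi(v_i) = \frac{d_i}{2|E|}$ for unweighted graphs, and then compares the resulting multisets of stationary probabilities across the two graphs. Under that invariant the paper can only conclude that graphs with different vertex counts or different normalized degree sequences are separated, and it explicitly concedes a failure mode --- two non-isomorphic graphs realizing the same degree sequence yield identical limiting distributions --- falling back on the informal claim that the probability of such a collision ``drops rapidly'' as $|V|$ grows. Your argument is deterministic and strictly stronger: by defining document indistinguishability as the existence of a node bijection $\phi$ carrying sentence multisets to sentence multisets, you let the document retain consistent node identities across sentences, so the support of first transitions recovers $N(v)$ exactly and $\phi$ is forced to be an isomorphism. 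The trade-off is that your notion of indistinguishability does most of the work: once the document is granted consistent node labels, it contains the edge set essentially verbatim, and the lemma becomes close to tautological, whereas the paper's relabeling-invariant summary is weaker but is the quantity its analysis actually manipulates. Two smaller observations: your concern about isolating a uniform-random-walk sub-document is unnecessary, since even the non-backtracking walks the paper actually uses to build the graph document take a uniformly random first step over $N(v)$ (there is no prior vertex to avoid), so the first-transition support argument applies to the whole document; and your asymptotic treatment of the finite-sample issue matches the idealization the paper itself adopts around its convergence lemma.
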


Additionally, we also analyze the walk coverage of walk tokens mathematically and draw the conclusion that it is feasible to approximate full information coverage with a limited number of walk tokens. We prove that the probability of a particular information type being uncovered decreases at a rate faster than exponential decay. The detailed proof can be found in Appendix A.2.

The expressiveness of different token types is also analyzed in Appendix A.3. Our analysis concludes that SGPM-tokens can capture global information, Hop-tokens have the potential to match or exceed the performance of MPNNs, and Walk-tokens demonstrate superiority over hop-wise tokens, such as those in NAGphormer \cite{NAGphormer}. Lastly, we analyze the time and space complexity of Tokenphormer. Detailed analyses are provided in Appendix A.4.

\section{Experiments}
\label{Experiment}

\begin{figure*}[tb]
\centering
\includegraphics[width=0.78\textwidth]{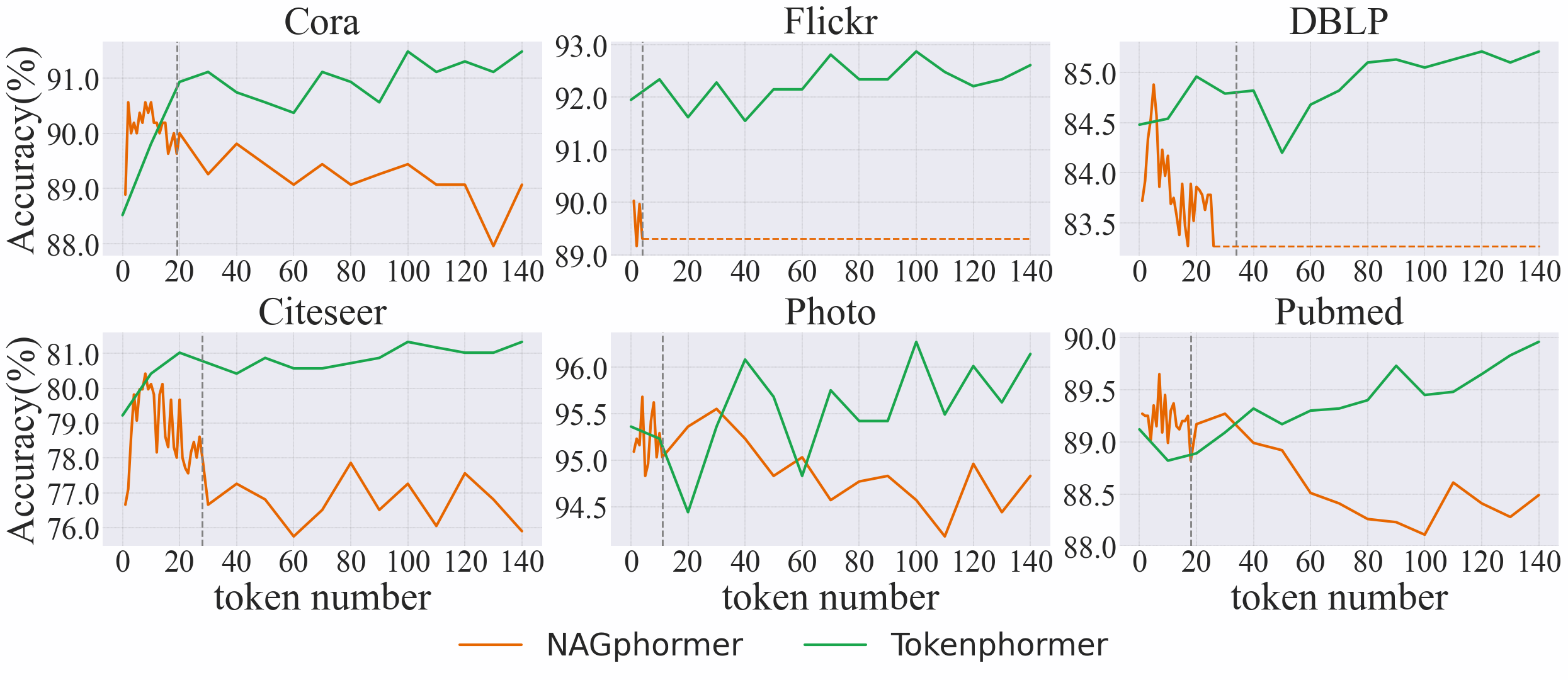}
\caption{Expressiveness Comparison.
The grey dashed line denotes graph diameter. For Flickr and DBLP, the orange dashed line means the NAN result.}
\label{allnagtp}
\end{figure*}

\subsection{Comparison of Tokenphormer with Baselines}
Table \ref{tab:main} compares all benchmark methods (Detailed descriptions of methods and datasets can be found in Appendix B.1) with Tokenphormer for node prediction tasks. Tokenphormer's results, presented as mean values with standard deviations from 10 runs with different random seeds, demonstrate state-of-the-art performance, outperforming most baselines on six datasets of varying sizes, proving the model's effectiveness.

Tokenphormer outperforms most MPNN baselines, showcasing strong generalization ability due to two factors: (1) Utilizing {\em hop-tokens} to enhance the density limit of {\em walk-tokens}, approximating layer-wise MPNN; (2) Capturing information beyond traditional MPNNs with diverse token types. As message passing layers increase, MPNNs suffer from over-smoothing and over-squashing, limiting their capacity to utilize information from distant neighbors.

Tokenphormer's superiority over graph Transformer baselines is due to several advantages: (1) {\em SGPM-tokens} extend the length limit of {\em walk-token}, capturing valuable contextual and global information for each node; (2) {\em Hop-tokens} extend the density limit of {\em walk-token}, retaining essential information from neighboring nodes and maintaining local relationships; (3) {\em Walk-tokens} generated through four types of walks add flexibility in capturing information from near and far neighbors. These mechanisms enable Tokenphormer to learn node representations through fine-grained tokens, capturing intricate details and relationships across the entire graph, leading to superior performance.

\subsection{Ablation Study}
The effectiveness of various token types was verified through ablation experiments conducted on all datasets. The results, detailed in Appendix B.3, show a decrease in accuracy with the removal of any token type.

Furthermore, we compared mixed walks of uniform random walks, non-backtracking random walks, neighborhood jump walks, and non-backtracking neighborhood jump walks with four single walks. The findings, which can be found in Appendix B.4, also demonstrate the superiority of mixed walks.

\subsection{Expressiveness Comparison}

Tokenphormer enhances model performance by utilizing a more extensive set of valuable tokens. 
To evaluate Tokenphormer's token expressive ability with a maximum number of tokens, we conducted a thorough quantitative and qualitative analysis, comparing NAGphormer~\cite{NAGphormer} and Tokenphormer under various token number settings.

For Tokenphormer, we use a step size of 10 {\em walk-tokens} to explore up to 140 {\em walk-tokens} with 1 fixed {\em SGPM-token} and 3 {\em hop-tokens}. For NAGphormer, we set step size as 1 for token numbers ranging from $1$ to graph diameter and 10 for other cases. Figure~\ref{allnagtp} shows the quantitative performance across six benchmark datasets. As the token number increases, NAGphormer's performance rises rapidly, peaks before the token number equals the graph diameter, and then declines, falling below Tokenphormer. This means that NAGphormer's Hop2Token strategy still suffers from some extent of over-smoothing with the token number's increase. Conversely, Tokenphormer improves in accuracy and becomes more stable with more tokens.


\subsection{Experiments on Heterogeneous Datasets}

Three more heterogeneous datasets, including Cornell, Wisconsin, and Actor, were also evaluated (Table~\ref{hetero}). Results show that even without SGPM, Tokenphormer is comparable to other methods and highly expressive on heterogeneous graphs. 1) GCN-based models perform poorly on heterogeneous graphs as they aggregate information from directly connected nodes. 2) Some Transformer-based models perform poorly, indicating susceptibility to irrelevant noise. 3) Tokenphormer performs remarkably well, especially better than NAGphormer, indicating its ability to explore graphs comprehensively and flexibly, thus obtaining richer information for target node representation learning.

\begin{table}[tbp]

    \centering
    \small
    \setlength{\tabcolsep}{1mm}{%
    \begin{tabular}{lcccc}
    \hline
    \textbf{Method} &\textbf{Year} &\textbf{Cornell}  &\textbf{Wisconsin} &\textbf{Actor}\\
    \hline
    GCN & 2017 & 45.67$\pm$7.96 & 52.55$\pm$4.27 &  28.73$\pm$1.17\\
    GAT & 2017 & 47.02$\pm$7.66 & 57.45$\pm$3.51 & 28.33$\pm$1.13 \\
    APPNP & 2018 & 41.35$\pm$7.15 & 55.29$\pm$3.90 & 29.42$\pm$0.81 \\
    GATv2 & 2022 & 50.27$\pm$8.97 & 52.74$\pm$3.96 & 28.79$\pm$1.47 \\
    \hline
    SAN & 2021 & 50.85$\pm$8.54 & 51.37$\pm$3.08 & 27.12$\pm$2.59 \\
    Gapformer & 2023 &  \textbf{77.57$\pm$3.43} & 83.53$\pm$3.42 & 36.90$\pm$0.82 \\
    NAGphormer & 2023 & 56.22$\pm$8.08 & 63.51$\pm$6.53 & 34.33$\pm$0.94\\
    \hline
    Tokenphormer & ours & 76.22$\pm$2.13 & \textbf{86.17$\pm$2.30} &  \textbf{37.01$\pm$0.83} \\
    \hline
    \end{tabular}
    }
    \caption{Experiments on Heterogeneous Datasets. The results of Tokenphormer here are obtained without SGPM-tokens. Best results are in \textbf{bold}. The experiment results of baselines are taken from~\cite{gapformer}.}
    \label{hetero}
\end{table}

\section{Conclusion}

This paper proposes Tokenphormer, a novel graph Transformer that overcomes the limitations of MPNNs and graph Transformers by generating fine-grained and comprehensive node tokens. Walk-tokens, created through diverse walks with varying walk tendencies and receptive fields, ensure fine-grained exploration of the graph. SGPM-tokens capture global information, extending the length limit of walk-tokens for greater comprehensiveness, while hop-tokens, generated by decoupling message-passing layers, enhance local coverage. Tokenphormer achieves state-of-the-art performance on node classification on diverse datasets, demonstrating its effectiveness for various graph tasks.

\section*{Acknowledgments}
This work was supported by the Science and Technology Development Fund Macau SAR (0003/2023/RIC, 0052/2023/RIA1, 0031/2022/A, 001/2024/SKL for SKL-IOTSC), the Research Grant of University of Macau (MYRG2022-00252-FST),  Shenzhen-Hong Kong-Macau Science and Technology Program Category C (SGDX20230821095159012), and Wuyi University joint Research Fund (2021WGALH14). This work was performed in part at SICC which is supported by SKL-IOTSC, University of Macau. 


    \newpage
    \appendix
    \begin{center}
    \LARGE
        Appendix
    \end{center}
    
    
    
    \section{Theoretical Analysis of Tokenphormer}\label{AnalysisTokenphormer}
    In this section, we conduct various analyses to evaluate the effectiveness and expressiveness of Tokenphormer. 
    Through these analyses, we aim to gain a deeper understanding of the capabilities and performance of Tokenphormer in effectively representing graph data.
    
    \subsection{Analysis of Graph Documents}\label{gdAnalysis}
    
    In main content, we propose the following Lemma to examine the expressiveness of graph documents.
    
    \begin{lemma}
    Graph documents possess the capability to distinguish non-isomorphic graphs.
    \end{lemma}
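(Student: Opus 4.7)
The plan is to prove the lemma by contrapositive: show that if two graphs $G_1$ and $G_2$ produce ``equivalent'' graph documents under some bijection of their vertex sets, then they must be isomorphic. The underlying intuition is that a graph document, being a rich collection of random walks starting from every node, implicitly encodes the full one-step transition matrix of the underlying random walk, which in turn encodes the adjacency structure up to relabeling.

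First, I would formalize what it means for two graph documents to be equivalent. A natural definition is the existence of a bijection $\phi: V(G_1) \to V(G_2)$ under which the induced probability distributions over random-walk sequences agree. By Lemma~\ref{lemma1}, the empirical walk frequencies in a graph document converge to a well-defined limiting distribution, so equivalence of two graphs' documents amounts to the two graphs inducing the same walk distribution up to the relabeling $\phi$.

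Next, I would recover the transition matrix from the walk distribution. Focusing on length-one walks, the conditional probability $\Pr(X_1 = v \mid X_0 = u)$ equals $1/d_u$ precisely when $uv$ is an edge and $0$ otherwise. Hence the walk distribution determines both the degree of each node and the set of its neighbors, which is equivalent to determining the adjacency matrix up to $\phi$. Chaining these equivalences---equivalent documents, equal walk distributions, equal transition matrices, equal adjacency matrices up to relabeling---yields that $G_1$ and $G_2$ must be isomorphic, and the contrapositive gives the lemma.

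The main obstacle I anticipate is the stochastic nature of the graph document: two executions of the walk-sampling procedure yield syntactically different documents even on the same graph. The argument above sidesteps this by passing to the limit of walk frequencies, where Lemma~\ref{lemma1} supplies convergence to a deterministic object. A secondary subtlety concerns whether \emph{finite} documents already distinguish non-isomorphic graphs; addressing this more quantitatively would require a concentration bound (e.g., Hoeffding) on how many walks suffice to observe the discrepancy in transition frequencies with high probability, but the capability statement of the lemma itself only demands the limiting argument.
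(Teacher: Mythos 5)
Your proposal is correct, but it proves the lemma by a genuinely different and in fact stronger route than the paper. The paper's proof works only with the \emph{stationary distribution} of the walk: it derives $\pi(v_i)=\frac{\sum_k w_{ik}}{\sum_i\sum_k w_{ik}}$ (i.e.\ $d_i/2|E|$ in the unweighted case) and distinguishes graphs by comparing the multisets of these values, which amounts to comparing normalized degree sequences. The paper itself concedes that this invariant fails for non-isomorphic graphs with identical degree profiles (e.g.\ two non-isomorphic $k$-regular graphs on the same vertex set size) and retreats to the claim that such collisions become ``unlikely'' as $|V|$ grows, so its argument is really only a partial, probabilistic separation. You instead exploit the \emph{bigram} statistics of the document --- the conditional frequencies $\Pr(X_1=v\mid X_0=u)=1/d_u$ for $uv\in E$ --- which in the limit recover the full transition matrix and hence the adjacency matrix up to the relabeling $\phi$; the contrapositive then yields the lemma outright, with no residual class of confusable graphs. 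What the paper's approach buys is a cheap, bijection-free invariant (a multiset of numbers) that is easy to compare; what yours buys is actual completeness of the distinguishing claim, at the cost of (i) quantifying over all bijections $\phi$, which makes the equivalence test as hard as graph isomorphism itself, and (ii) needing a concentration bound (as you note, Hoeffding, which the paper deploys elsewhere in Appendix A.2) if one wants the guarantee to hold for finite documents rather than only in the limit supplied by Lemma~1.
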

    \begin{proof}
    \label{graphdocumentDis}
    Suppose $G$ is an undirected, weighted, and connected graph. If there exists an edge between two arbitrary nodes, denote as $v_i$ and $v_j$, then its weight $w_{ij}=w_{ji}>0$, else $w_{ij}=w_{ji}=0$. The transition probability of random walk from $v_i$ to $v_j$ is:
    \begin{equation}
        P_{ij} = \frac{w_{ij}}{\sum_k{w_{ik}}}
    \end{equation}
    
    Since random walk on $G$ can be viewed as an invertible Markov chain, we have:
    \begin{equation}
        \pi (v_i)P_{ij} = \pi (v_j)P_{ji} \rightarrow \frac{\pi (v_i)}{\sum_k w_{ik}} = \frac{\pi (v_j)}{\sum_k w_{jk}}
    \end{equation}
    
    Then, for all node $v_i$, we have:
    \begin{equation}
        \frac{\pi (v_i)}{\sum_k w_{ik}} = C \rightarrow \pi (v_i) = C \sum_k w_{ik}
    \end{equation}
    where $C$ is a constant. Since $\pi$ is a probability distribution on all states, we have $\sum_i \pi (v_i) = 1$. Then we can get:
    \begin{equation}
        \sum_i (C \sum_k w_{ik}) = 1 \rightarrow C = \frac{1}{\sum_i  \sum_k w_{ik}}
    \end{equation}
    
    Finally, the limiting distribution of the Markov Chain is calculated:
    \begin{equation}
        \pi (v_i) = \frac{\sum_k w_{ik}}{\sum_i  \sum_k w_{ik}}
    \end{equation}
    
    Alternatively, in the scenario where $G$ is unweighted, i.e., the weight between two connected nodes is 1, the resulting limiting distribution of the graph document can be described as follows:
    \begin{equation}
        \pi (v_i) = \frac{\sum_k w_{ik}}{\sum_i  \sum_k w_{ik}} = \frac{d_i}{\sum_i d_i} = \frac{d_i}{2|E|}
    \end{equation}
    
    Let $G_a$ and $G_b$ represent two arbitrary graphs. If they are isomorphic, the limiting distribution of these two graphs must equal each other. In other words, set $\pi_{ai}$ ($i \in V_{G_a}$) and set $\pi_{bj}$ ($j\in V_{G_b}$) are bijective to each other. In this case, the graph document ensures that it would not distinguish isomorphic graphs into different graphs. 
    
    If $G_a$ and $G_b$ are non-isomorphic, there are two situations. For the first situation, if $G_a$ differs from $G_b$ in $|V|$, it is clear that $\pi_a \ne \pi_b$ since the larger set between $\pi_{ai}$ and $\pi_{bj}$ is not surjective to smaller one. Thus, the graph document can easily distinguish them. For another situation, when $G_a$ and $G_b$ differs in $|V|$, $G_a$ and $G_b$ are indistinguishable to graph document if and only if $G_a$ and $G_b$ satisfy: $f(x) = x$ is bijective function when:
    \begin{equation}
    \small
        D = \left\{ \frac{\sum_k w_{aik}}{\sum_{ai}  \sum_k w_{aik}} \Big\vert i\in V_{G_a}\right\}, 
        R = \left\{ \frac{\sum_k w_{bjk}}{\sum_{bj}  \sum_k w_{bjk}} \Big\vert j \in V_{G_b} \right\}
    \end{equation}
    and
    \begin{equation}
    \small
        D = \left\{ \frac{\sum_k w_{bjk}}{\sum_{bj}  \sum_k w_{bjk}} \Big\vert j \in V_{G_b} \right\}, 
        R = \left\{ \frac{\sum_k w_{aik}}{\sum_{ai}  \sum_k w_{aik}} \Big\vert i\in V_{G_a}\right\}
    \end{equation}
    where $D$ is the domain and $R$ represents the range of rejection function $f(x) = x$. In this case, the probability for two graphs with the same $|V|$ and different structures to be indistinguishable drops rapidly with the increase of $|V|$.
    \end{proof}
    
    Under this analysis, the distinguishing ability of the graph document enables it to capture unique characteristics and structures inherent to the graph, thus obtaining high expressive power.

    
    \subsection{Analysis of Walk Coverage}\label{Walk Coverage}
    
    Walk coverage pertains to the extent of information coverage within a node's neighborhood, utilizing a limited number of walks. We classify walks based on node labels to achieve this, enabling a systematic analysis of the information encapsulated within the walks. 
    
    When two walks have the same length ($k$) and share identical node labels at corresponding positions, we assume the information obtained from these walks is similar. Consequently, we classify such walks into the same information type, denoted as ${ l_1, l_2, ..., l_k }$, where $l_i$ represents the label of node $i$. By doing so, we can analyze walk coverage based on the coverage of information types present in a node's neighborhood.
    
    \begin{figure}[!t] 
    \centering
    \includegraphics[width=0.42\textwidth]{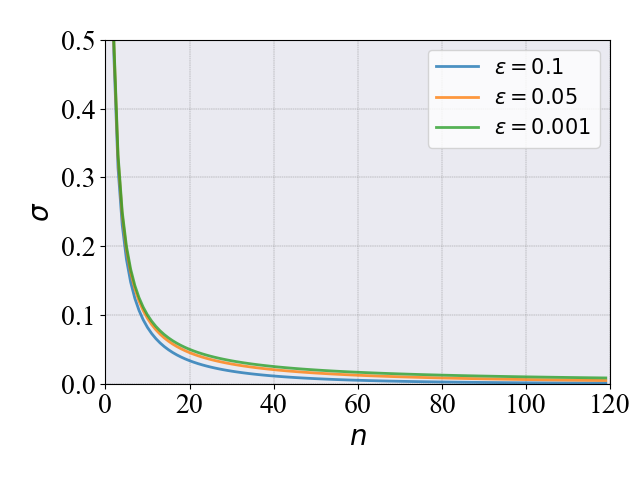}
    \caption{\textbf{Coverage Analysis.} Let $\sigma = \frac{\exp\left(-2\epsilon^2n\right)}{n}$, the figure shows the change of $\sigma$ with increase of $n$ in case of different $\epsilon$ in equation \ref{equCoverage}.}
    \label{coverageanaFig}
    \end{figure}
    
    
    
    As proven by~\cite{sun2022beyond}, we follow a similar approach to analyzing the walk coverage. The definition of label-limited transition matrix $P^{(l)}$ is:
    \begin{equation}
        P_{i,j}^{(l)}=
        \begin{cases}
            P_{i,j}, & Y_j = l \\
            0, & \text{otherwise}
        \end{cases}
    \end{equation}
    where $l$ denotes the specified label, $Y_j$ is the label of $v_j$ and $P_{i,j}$ is the raw transition probability of $v_i$ to $v_j$. Then the probability of $v_i$ transition to $v_j$ through the type of $\{ l_1, l_2, ..., l_k \}$ is:
    \begin{equation}
        P^{ \{ l_1, l_2, \ldots, l_k \}} = \prod_{i=1}^{k} P^{(l_i)}
    \end{equation}
    
    Then, the probability of an information type $\{ l_1, l_2, \ldots, l_k \}$ starting from $v_S$ being sampled is:
    \begin{equation}
        P_{S}^{\{l_1, l_2, \ldots, l_k\}} = \sum_{i=1}^{|V|} P_{S,i}^{\{ l_1, l_2, \ldots, l_k \}}
    \end{equation}
    
    Let $N_{S}^{\{ l_1, l_2, \ldots, l_k \}}$ denotes number of sampled information types in $n$ times sampling. It is clear that $N_{S}^{\{ l_1, l_2, \ldots, l_k \}}$ follows a binominal distribution:
    \begin{equation}
        N_{S}^{\{l_1, l_2, \ldots, l_k\}} \sim B(n, P_{S}^{\{l_1, l_2, \ldots, l_k\}})
    \end{equation}
    
    At last, according to Hoeffding's inequality~\cite{Hoeffding}, we have:
    \begin{equation}
        \label{equCoverage}
        \begin{aligned}
            D_{S}^{\{l_1, l_2, \ldots, l_k\}} = |\frac {N_{S}^{\{l_1, l_2, \ldots, l_k\}}}{n} - P_{S}^{\{l_1, l_2, \ldots, l_k\}}|\\
            P(D_{S}^{\{l_1, l_2, \ldots, l_k\}} > \epsilon) \leq \frac{\exp\left(-2\epsilon^2n\right)}{n}
        \end{aligned}
    \end{equation}
    where $D_{S}^{\{l_1, l_2, \ldots, l_k\}}$ stands for sampling deviation. Thus, for any $\epsilon > 0$, the probability of one information type being not sampled being larger than $\epsilon$ decreases rapidly (since $\exp\left(-2\epsilon^2n\right)$ already decreases exponentially, and $\frac{\exp\left(-2\epsilon^2n\right)}{n} \leq \exp\left(-2\epsilon^2n\right)$ when $n \ge 1$) as the number of sampled walks $n$ increases. (As shown in Fig. \ref{coverageanaFig})
    
    The above analysis demonstrates that, although covering all information from the node's neighborhood is challenging, obtaining most information is still achievable through a limited number of samplings. Our experiments also verified this point: when the number of walks increased to a certain degree, the experimental results tended to be stable, and the model reached the fitting.

    \subsection{Analysis of tokens}\label{analysistokens}
    The proposed Tokenphormer generates multiple tokens with different granularities and learns each other's weights in a fully connected scheme through the Transformer, thus alleviating the problems of over-smoothing and over-squashing. At the same time, by generating plentiful walk-based tokens, Tokenphormer is more effective in retaining structure information than other graph Transformers.
    
    \stitle{SGPM-token} Appendix~\ref{gdAnalysis} has demonstrated that the graph document can distinguish between different graphs, showcasing its proficiency in capturing the global information of the graph. The the SGPM-token, derived through SGPM, is critical in acquiring contextual information for each node within the graph document. With the generation of numerous walks for each node, the graph document ensures comprehensive coverage for every node. Additionally, the length of the graph sentence follows a normal distribution, with a mean equal to the graph's radius. This intentional design choice aligns the receptive field of the graph sentence, starting from each node, with the global characteristics of the graph. Furthermore, the graph sentences enable the exploration of far nodes by utilizing the non-backtracking random walk. This approach effectively reduces redundancy, allowing nodes to acquire diverse contexts and enriching the semantic information they gather~\cite{NEURIPS2022_RFGNN}.


    \stitle{Hop-token} \label{analysis_hop}
    The hop-token comes from the decoupled message passing layer, and its calculation is:
    \begin{equation}
        H^k = A^k X
    \end{equation}
    where $H^k$ represents the feature matrix of $k$-hop. $H_i^k$ is node $i$'s aggregated information from $k$-th layer of MPNN. {\em hop-tokens} in Tokenphormer can be viewed as MPNN with layer-wise attention added. Since MPNN already proves its expressive power in node prediction, the hop-token has the potential to achieve equal or better performance~\cite{NAGphormer}.
    
    \begin{figure}[!t]
    \centering
    \includegraphics[width=0.385\textwidth]{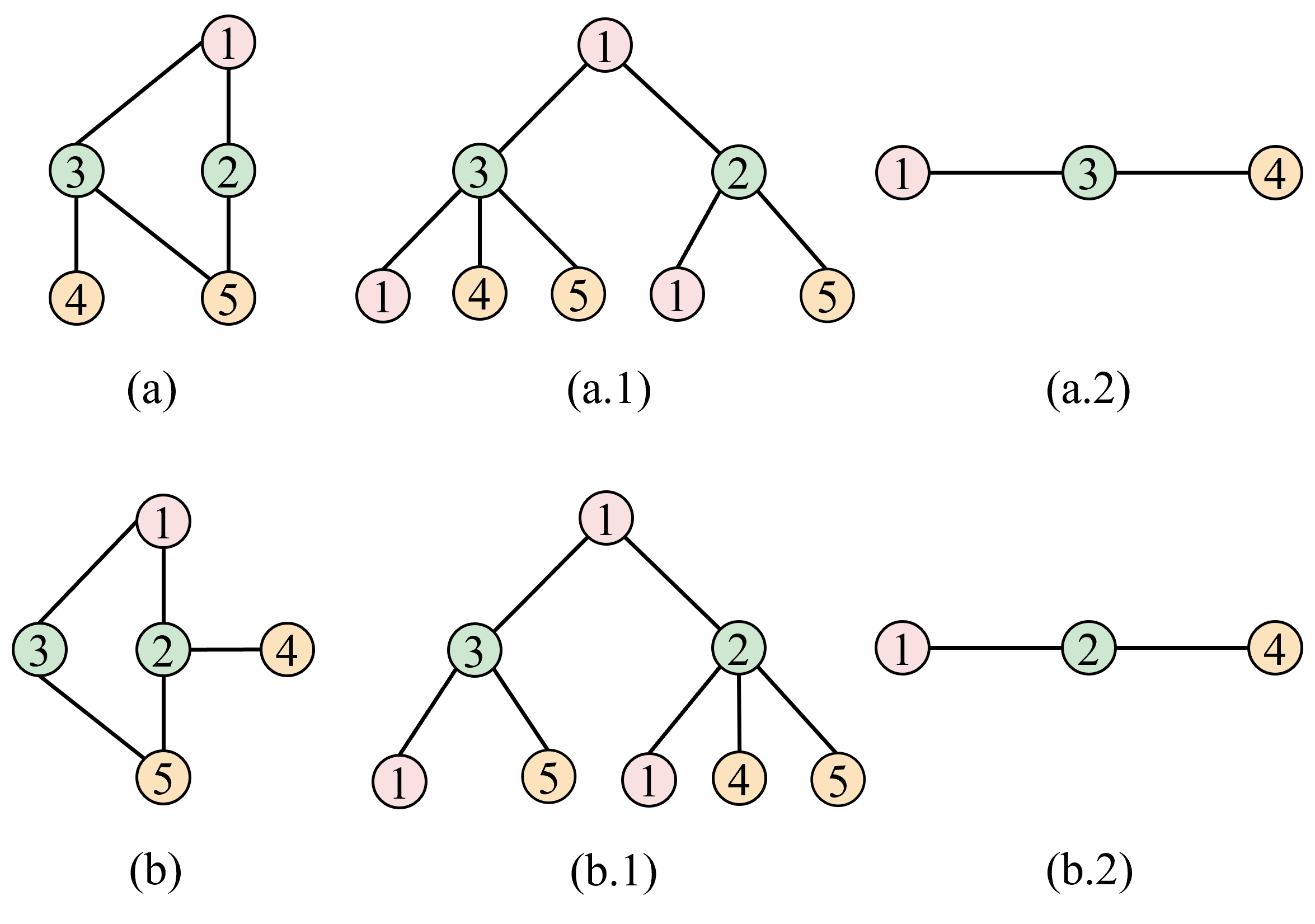}
    \caption{hop-token vs walk-token.}
    \label{hopwalk}
    \end{figure}
    
    \stitle{Walk-token}
    Instances arise where the hop-token falls short in distinguishing between certain situations. This is exemplified in Fig. \ref{hopwalk}, where (a) and (b) represent graphs with distinct structures, and (a.1) and (b.2) illustrate the message passing trees for node 1 in their respective graphs.
    In the depicted figure, the horizontal layers can be interpreted as hop-tokens, while the vertical paths represent walk-tokens. It is evident that when considering a 2-hop message passing scenario, hop-tokens cannot differentiate between graphs (a) and (b). However, walk-tokens, exemplified by (a.2) and (b.2), exhibit the potential to distinguish between the two graphs successfully.

    \subsection{Complexity Analysis}\label{complexityAnalysis}
    The generation of all tokens in Tokenphormer can be performed beforehand. SGPM is pre-trained, so the SGPM-token can be obtained by Tokenphormer easily in $O(1)$ complexity. Moreover, walks are generated before training, and Hop-tokens can also be computed in advance. So, we only consider the complexity of Tokenphormer without the token generation phase.
    
    \stitle{Time complexity} The time complexity of Tokenphormer primarily relies on the self-attention module of the Transformer. 
    For each node in the graph, we perform computations proportional to the square of the number of tokens $N_t$ and the dimension of the feature $d_F$, thus resulting in a time complexity of $O(|V| \cdot N_t^2 \cdot d_F)$, where $|V|$ denotes the number of nodes.
    
    \stitle{Space complexity} The space complexity is determined by the number of model parameters and the outputs of each layer. It can be broken down into two main parts: 1) The Transformer layer contributes $O(d_F^2 \cdot N_L)$, where $N_L$ is the number of Transformer layers. 2) The attention matrix and hidden node representations add $O(S_b \cdot N_t^2 + S_b \cdot N_t \cdot d_F)$, where $S_b$ denotes the batch size. Thus, the total space complexity is $O(d_F^2 \cdot N_L + S_b \cdot N_t^2 + S_b \cdot N_t \cdot d_F)$.

    \section{Experiment}
    \subsection{Experiment Setup}
    \stitle{Datasets}
    To comprehensively evaluate the effectiveness of Tokenphormer, we conduct experiments on six homogeneous graph datasets with varying node numbers, ranging from 2,708 to 19,717. The datasets include small-sized datasets like Cora and Citeseer (around 3,000 nodes), medium-sized datasets like Flickr and Amazon Photo (around 7,500 nodes), and large-sized datasets like DBLP and Pubmed (around 18,000 nodes). We apply a 60\%/20\%/20\% train/val/test split (consistent with Gophormer~\cite{Gophormer} and NAGphormer~\cite{NAGphormer}) for these benchmark datasets. 
    The detailed dataset information is presented in Table \ref{tab1}. 
    
    \begin{table}[!hbt]
    \centering
    \small
    \setlength{\tabcolsep}{1mm}{%
    \begin{tabular}{lccccc}
    \hline
    \textbf{Dataset}  & \textbf{Nodes} & \textbf{Edges} & \textbf{Classes} & \textbf{Features} & \textbf{Diameter} \\
    \hline
    Cora        & 2,708    & 5,278    & 7         & 1,433      & 19      \\
    Citeseer    & 3,327    & 4,522    & 6         & 3,703      & 28      \\
    Flickr      & 7,575    & 239,738  & 9         & 12,047     & 4      \\
    Photo       & 7,650    & 238,163  & 8         & 745        & 11      \\
    DBLP        & 17,716   & 52,864   & 4         & 1,639      & 34      \\
    Pubmed      & 19,717   & 44,324   & 3         & 500        & 18      \\
    \hline
    \end{tabular}
    }
    \caption{Statistics on datasets. Graph diameter is calculated using the method described in~\cite{graphDiameter}.}
    \label{tab1}
    \end{table}

    \stitle{Baselines} To comprehensively assess the effectiveness of Tokenphormer in the context of node classification tasks, a comparative analysis is conducted against a diverse set of 16 advanced baseline models. The evaluated models encompass eight prominent MPNN methods, including GCN~\cite{GCN_kipf2017semi}, GAT~\cite{GAT_veličković2018graph}, GraphSage~\cite{GraphSAGE_Hamilton2017InductiveRL}, APPNP~\cite{Klicpera_Bojchevski_Günnemann_2018}, JKNet~\cite{MPNN1_pmlr-v80-xu18c}, GPR-GNN~\cite{GPR-GNN}, GRAND+~\cite{GRAND+}, GATv2~\cite{GATv2} and eight innovative graph Transformer models, including GT~\cite{GT}, Graphormer~\cite{Graphormer}, SAN~\cite{Kreuzer_Beaini_Hamilton_Létourneau_Tossou_2021}, Gophormer~\cite{Gophormer}, ANS-GT~\cite{ANS-GT}, GraphGPS~\cite{rampasek2022GPS}, Exphormer~\cite{Exphormer}, Gapformer~\cite{gapformer}, and NAGphormer ~\cite{NAGphormer}. 
    
    \begin{table*}[htb!]
        \begin{center}
        \small
        \setlength{\tabcolsep}{1mm}{%
        \begin{tabular}{lcccccc}
        \hline
        \textbf{Method} &\textbf{Cora} &\textbf{Citeseer} &\textbf{Flickr} &\textbf{Photo} &\textbf{DBLP} &\textbf{Pubmed}\\
        \hline
        Tokenphormer&  \textbf{91.20$\pm$0.47} &  \textbf{81.04$\pm$0.24} &  \textbf{92.44$\pm$0.35} &  \textbf{96.14$\pm$0.14} &  \textbf{85.13$\pm$0.10} &  \textbf{89.94$\pm$0.20}\\
        \hline
        w/o SGPM-token & 90.33$\pm$0.30 & 79.85$\pm$1.00 & 91.75$\pm$0.50 & 95.61$\pm$0.47 & 84.42$\pm$0.41 & 89.38$\pm$0.46\\
        w/o walk-token & 88.37$\pm$1.23 & 79.70$\pm$0.71 & 92.24$\pm$0.35 & 94.81$\pm$0.44 & 83.93$\pm$0.16 & 89.60$\pm$0.35\\
        w/o hop-token& 90.73$\pm$0.34 & 80.19$\pm$2.32 & 91.97$\pm$0.71 & 95.80$\pm$0.30 & 85.01$\pm$0.16 & 89.35$\pm$0.36\\
        \hline
        only URW & 90.41$\pm$0.56 & 80.69$\pm$0.49 & 92.07$\pm$0.77 & 95.83$\pm$0.32 & 84.90$\pm$0.39 & 89.33$\pm$0.39\\
        only NBRW & 89.72$\pm$0.36 & 80.03$\pm$0.90 & 92.27$\pm$0.53 & 95.74$\pm$0.41 & 84.86$\pm$0.25 & 89.26$\pm$0.45\\
        only NJW & 90.80$\pm$0.66 & 80.92$\pm$0.50 & 92.38$\pm$0.70 & 95.61$\pm$0.38 & 84.70$\pm$0.44 & 89.50$\pm$0.36\\
        only NBNJW & 89.63$\pm$0.64 & 80.56$\pm$0.74 & 92.09$\pm$0.39 & 96.00$\pm$0.31 & 84.71$\pm$0.35 & 89.36$\pm$0.35\\
        \hline
        \end{tabular}
        }
        \end{center}
        \caption{Ablation Experiment. Best results are in \textbf{bold}. URW, NBRW, NJW, and NBNJW represent uniform random walk, non-backtracking random walk, neighborhood jump walk, and non-backtracking neighborhood jump walk. 
        }
        \label{tab:abla}
    \end{table*}
    
    \subsection{Experiment Details}
    
    For the model configuration of Tokenphormer, in SGPM, we generate 100 non-backtracking random walks for each node on the graph as the training dataset, where the walk lengths follow a normal distribution $\mathcal{N}(\mu= $ graph radius or 10$, \sigma^2=1)$, and 20 walks are generated as the validate dataset. In the {\em walk-token} part, we generate 100 walks with lengths starting from 4 and a ratio of mixed walk types starting from $25\%:25\%:25\%:25\%$ and tune these hyper-parameters to the best. For {\em hop-token}, we fixed the hop number to 3 for all datasets. In model training, we adopt the AdamW optimizer~\cite{AdamW} and set the learning rate to $1e^{-2}$ for Flickr and Photo and $5e^{-3}$ for other datasets and weight decay to $1e^{-5}$. The dropout rate is set to $0.1$, and the Transformer head is set to 1. The batch size is set to 2000. For SGPM, as a pre-train phase, we conducted it on a Linux server with DGX-H800 (4 $\times$ NVIDIA H800 80 GB) GPU. Tokenphormer was conducted on a Linux server with 1 R9-5950X CPU and an NVIDIA RTX 3090TI (24GB) GPU.

    \subsection{Ablation Study}\label{app:ablationStudy}
    To analyze the effectiveness of {\em SGPM-token}, {\em walk-token}, and {\em hop-token}, ablation experiments are carried out on all datasets, as shown in Table \ref{tab:abla}. It can be seen that with any token dropped, the accuracy decreases to some extent.
    
    \stitle{SGPM-token} {\em SGPM-token} captures contextual and global information of the selected node. Without {\em SGPM-token}, the accuracy of Tokenphormer drops in all six datasets.
    
    \stitle{Walk-token} Tokenphormer's accuracy decreases the most when {\em walk-token} are omitted. This is because {\em walk-tokens} are able to capture fine-grained information from a large neighborhood.
    
    \stitle{Hop-token} {\em Hop-token} ensures coverage in $k$-hop neighborhoods. However, for most datasets, large number of {\em walk-tokens} already cover all nodes in $k$-hop neighborhoods, resulting in a smaller accuracy decrease when {\em hop-token} is withdrawn.
    
    \subsection{Walk Comparison}\label{app:walkComparison}
    We compare mixed walks (uniform random walks, non-backtracking random walks, neighborhood jump walks, and non-backtracking neighborhood jump walks) with four single walks. Table \ref{tab:abla} shows that different walks vary in effectiveness across datasets, but mixed walks consistently achieve higher accuracy. For example, for citation networks like Cora and Citeseer, where information from near neighbors is crucial, walks without non-backtracking property can perform better than non-backtracking ones. For datasets like Photo, the situation changes. The non-backtracking neighborhood jump walk performs the best among all types of walks. These results indicate that mixed walks leverage the strengths of all types to enhance overall accuracy.

\end{document}